\newtheorem{theorem}{Theorem}
\newtheorem{conjecture}[theorem]{Conjecture}
\newtheorem{corollary}[theorem]{Corollary}
\newtheorem{lemma}[theorem]{Lemma}
\newtheorem{proposition}[theorem]{Proposition}
\newenvironment{proof}[1][Proof]{\noindent\textbf{#1.} }{\ \rule{0.5em}{0.5em}}
\begin{document}

\title{A vector-contraction inequality for Rademacher complexities}
\author{Andreas Maurer \\
Adalbertstr. 55 \ D-80799 M\"{u}nchen\newline
\\
Germany\\
am@andreas-maurer.eu}
\maketitle

\begin{abstract}
The contraction inequality for Rademacher averages is extended to Lipschitz
functions with vector-valued domains, and it is also shown that in the
bounding expression the Rademacher variables can be replaced by arbitrary
iid symmetric and sub-gaussian variables. Example applications are given for
multi-category learning, K-means clustering and learning-to-learn.
\end{abstract}

\section{Introduction}

The method of Rademacher complexities has become a popular tool to prove
generalization in learning theory. One has the following result \cite%
{Bartlett 2002}, which gives a bound on the estimation error, uniform over a
loss class $\tciFourier $. 

\begin{theorem}
\label{Theorem Rademacher bound}Let $\mathcal{X}$ be any set, $\tciFourier $
a class of functions $f:\mathcal{X\rightarrow }\left[ 0,1\right] $ and let $%
X,X_{1},...,X_{n}$ be iid random variables with values in $\mathcal{X}$.
Then for $\delta >0$ with probability at least $1-\delta $ in , $\mathbf{X}%
=\left( X_{1},...,X_{n}\right) $ we have for every $f\in \tciFourier $ that%
\begin{equation*}
\mathbb{E}f\left( X\right) \leq \frac{1}{n}\sum f\left( X_{i}\right) +\frac{2%
}{n}\mathbb{E}\left[ \sup_{f\in \tciFourier }\sum_{i=1}^{n}\epsilon
_{i}f\left( X_{i}\right) |\mathbf{X}\right] +\sqrt{\frac{9\ln 2/\delta }{2n}}%
.
\end{equation*}
\end{theorem}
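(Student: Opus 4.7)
The plan is a two-step concentration argument around the empirical Rademacher average, bridged by classical ghost-sample symmetrization. Write
\[
\Phi(\mathbf{X}) \;=\; \sup_{f \in \tciFourier}\!\left(\mathbb{E}f(X) - \frac{1}{n}\sum_{i=1}^{n} f(X_i)\right),\qquad
\hat{R}(\mathbf{X}) \;=\; \frac{1}{n}\,\mathbb{E}\!\left[\,\sup_{f\in\tciFourier}\sum_{i=1}^{n}\epsilon_{i}f(X_i)\,\Big|\,\mathbf{X}\right],
\]
so that the claimed inequality is equivalent to $\Phi(\mathbf{X}) \le 2\hat{R}(\mathbf{X}) + \sqrt{9\ln(2/\delta)/(2n)}$.

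First I would verify bounded differences for both $\Phi$ and $\hat{R}$. Because $f\in[0,1]$, replacing a single $X_j$ by an independent copy changes $\Phi$ by at most $1/n$; the same constant applies to $\hat{R}$ since, for every fixed sign pattern $\epsilon$, the supremum shifts by at most $|\epsilon_j|\,|f(X_j)-f(X_j')|/n \le 1/n$, and this estimate survives integration over $\epsilon$. Two applications of McDiarmid's inequality, each at level $\delta/2$, then yield simultaneously with probability at least $1-\delta$
\[
\Phi(\mathbf{X}) \le \mathbb{E}\Phi + \sqrt{\tfrac{\ln(2/\delta)}{2n}}\qquad\text{and}\qquad \mathbb{E}\hat{R} \le \hat{R}(\mathbf{X}) + \sqrt{\tfrac{\ln(2/\delta)}{2n}}.
\]

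The remaining step is the symmetrization bound $\mathbb{E}\Phi \le 2\,\mathbb{E}\hat{R}$. I would introduce an independent ghost sample $X_1',\dots,X_n'$, rewrite $\mathbb{E}f(X) = \mathbb{E}\bigl[\tfrac{1}{n}\sum f(X_i')\bigr]$, pull the supremum outside the ghost expectation using Jensen's inequality, insert Rademacher signs exploiting the joint exchangeability of each pair $(X_i,X_i')$, and finally apply the triangle inequality to split into two identically distributed terms. Chaining these three estimates gives
\[
\Phi(\mathbf{X}) \le 2\hat{R}(\mathbf{X}) + 3\sqrt{\tfrac{\ln(2/\delta)}{2n}} = 2\hat{R}(\mathbf{X}) + \sqrt{\tfrac{9\ln(2/\delta)}{2n}},
\]
which rearranges to the stated bound.

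The only delicate point is the bounded-differences constant for $\hat{R}$: one must estimate the effect of perturbing a single $X_j$ \emph{inside} the supremum and \emph{before} integrating over $\epsilon$, otherwise the naive bound gives $2/n$ per coordinate and inflates the constant from $9$ to $25$. Everything else is routine: McDiarmid, Jensen, and symmetrization applied in the standard order.
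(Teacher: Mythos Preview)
The paper does not prove this theorem at all: it is quoted from \cite{Bartlett 2002} as a known background result, so there is no in-paper argument to compare against. Your proof is correct and is precisely the standard Bartlett--Mendelson route (bounded differences for $\Phi$ and $\hat R$, two McDiarmid applications at level $\delta/2$, and ghost-sample symmetrization $\mathbb{E}\Phi\le 2\,\mathbb{E}\hat R$), which combine to the $3\sqrt{\ln(2/\delta)/(2n)}=\sqrt{9\ln(2/\delta)/(2n)}$ deviation term. Your remark about the bounded-differences constant for $\hat R$ is also on point: because one perturbs $X_j$ and not $\epsilon_j$, the relevant oscillation is $|f(X_j)-f(X_j')|\le 1$ rather than the range of $\epsilon_j f(X_j)$, giving $1/n$ per coordinate.
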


Here the $\epsilon _{1},...,\epsilon _{n}$ are (and will be throughout this
paper) independent Rademacher variables, uniformly distributed on $\left\{
-1,1\right\} $. For any class $\tciFourier $ of real, not necessarily $\left[
0,1\right] $-valued, functions defined on $\mathcal{X}$, and any vector $%
\mathbf{x}=\left( x_{1},...,x_{n}\right) \in \mathcal{X}^{n}$, the quantity 
\begin{equation*}
\mathbb{E}\sup_{f\in \tciFourier }\sum_{i=1}^{n}\epsilon _{i}f\left(
x_{i}\right) 
\end{equation*}%
is called the Rademacher complexity of the class $\tciFourier $ on the
sample $x=\left( x_{1},...,x_{n}\right) \in \mathcal{X}^{n}$. Here we omit
the customary factor $2/n$, as this will simplify most of our statements
below.\bigskip 

Most applications of the method at some point or another use the so-called
contraction inequality. For functions $h_{i}:%
\mathbb{R}
\rightarrow 
\mathbb{R}
$ with Lipschitz constant $L$, the scalar contraction inequality states that%
\begin{equation*}
\mathbb{E}\sup_{f\in \tciFourier }\sum_{i=1}^{n}\epsilon _{i}h_{i}\left(
f\left( x_{i}\right) \right) \leq L\mathbb{E}\sup_{f\in \tciFourier
}\sum_{i=1}^{n}\epsilon _{i}f\left( x_{i}\right) .
\end{equation*}%
\bigskip 

There are situations when it is desirable to extend this result to the case,
when the class $\tciFourier $ consists of vector-valued functions and the
loss functions are Lipschitz functions defined on a more than
one-dimensional space. Such occurs for example in the analysis of
multi-class learning, $K$-means clustering or learning-to-learn. At present
one has dealt with these problems by passing to Gaussian averages and using
Slepian's inequality (see e.g. Theorem 14 in \cite{Bartlett 2002}). This is
sufficient for many applications, but there are two drawbacks: 1. the proof
relies on a highly nontrivial result (Slepian's inequality) and 2. while
Rademacher complexities are tightly bounded in terms of Gaussian
complexities, it is well known (\cite{Ledoux Talagrand 1991}, \cite%
{Boucheron 2013}) that bounding the latter in terms of the former incurs a
factor logarithmic in the number of variables, potentially resulting in an
unnecessary weakening of the results (see e.g. \cite{Kloft 2015}).

In this paper we will prove the vector contraction inequality%
\begin{equation}
\mathbb{E}\sup_{f\in \tciFourier }\sum_{i=1}^{n}\epsilon _{i}h_{i}\left(
f\left( x_{i}\right) \right) \leq \sqrt{2}L\mathbb{E}\sup_{f\in \tciFourier
}\sum_{i=1}^{n}\sum_{k=1}^{K}\epsilon _{ik}f_{k}\left( x_{i}\right) ,
\label{Vector Rademacher Contraction Inequality}
\end{equation}%
where the members of $\tciFourier $ take values in $%
\mathbb{R}
^{K}$ with component functions $f_{k}\left( .\right) $, the $h_{i}$ are $L$%
-Lipschitz functions from $%
\mathbb{R}
^{K}$ to $%
\mathbb{R}
$, and the $\epsilon _{ik}$ are an $n\times K$ matrix of independent
Rademacher variables. It is also shown that the $\epsilon _{ik}$ on the
right hand side of (\ref{Vector Rademacher Contraction Inequality}) can be
replaced by arbitrary iid random variables as long as they are symmetric and
sub-gaussian, and $\sqrt{2}$ is replaced by a suitably chosen constant.
Furthermore the result extends to infinite dimensions in the sense that $%
\mathbb{R}
^{K}$ can be replaced by the Hilbert space $\ell _{2}$. The proof given is
self-contained and independent of Slepian's inequality.\bigskip 

We illustrate applications of this inequality by showing that it applies to
loss function in a variety of relevant cases. In Section \ref{Section loss
functions} we discuss multi-class learning, $K$-means clustering and
learning-to-learn. We also give some indications of how the vector-valued
complexity on the right hand side of (\ref{Vector Rademacher Contraction
Inequality}) may be bounded. An example pertaining to the truly infinite
dimensional case is given, generalizing some bounds for least-squares
regression with operator valued kernels (\cite{Massi 2005}, \cite{Caponnetto
2007}) to more general loss-functions.\bigskip 

The inequality (\ref{Vector Rademacher Contraction Inequality}) is perhaps
not the most natural form of a vector-contraction inequality, and, since the
right hand side is sometimes difficult to bound, one is led to look for
alternatives. An attractive conjecture might be the following.

\begin{conjecture}
\label{Conjecture False}\textit{Let }$\mathcal{X}$\textit{\ be any set, }$%
n\in 
\mathbb{N}
$\textit{, }$\left( x_{1},...,x_{n}\right) \in X^{n}$\textit{, let }$%
\tciFourier $\textit{\ be a class of functions }$f:\mathcal{X}\rightarrow
\ell _{2}$\textit{\ and let }$h:\ell _{2}\rightarrow 
\mathbb{R}
$\textit{\ have Lipschitz norm }$L$\textit{. Then}%
\begin{equation*}
\mathbb{E}\sup_{f\in \tciFourier }\sum_{i}\epsilon _{i}h\left( f\left(
x_{i}\right) \right) \leq KL\mathbb{E}\sup_{f\in \tciFourier }\left\Vert
\sum_{i}\epsilon _{i}f\left( x_{i}\right) \right\Vert ,
\end{equation*}%
\textit{where }$K$\textit{\ is some universal constant.}
\end{conjecture}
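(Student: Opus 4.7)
\noindent The very name attached to the conjecture tips off the reader where the author is heading: the claim will be refuted, not proved. My plan is accordingly to construct an explicit counterexample in which the ratio of the two sides grows without bound in $n$, so that no universal constant $K$ can rescue the inequality. The heuristic is this. The right-hand side is an expected supremum of $\ell_{2}$-norms of a Rademacher sum, and whenever the vectors $f(x_{i})$ are (near-)orthogonal, Pythagoras pins this norm down at the order of $\sqrt{n}$, regardless of the sign pattern. The left-hand side, by contrast, is free to aggregate up to $n$ separate order-one contributions provided $\mathcal{F}$ is rich enough to contain one function tailored to each possible sign pattern of $(\epsilon_{1},\ldots,\epsilon_{n})$.

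\smallskip

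\noindent To make this precise, take $\mathcal{X}=\{x_{1},\ldots,x_{n}\}$, set $h(y)=\|y\|_{\ell_{2}}$ (so $L=1$ by the reverse triangle inequality), and index $\mathcal{F}$ by the subsets of $\{1,\ldots,n\}$ via
\[
f_{S}(x_{i})=\mathbf{1}[i\in S]\,e_{i}\in\ell_{2},
\]
where $e_{i}$ is the $i$-th canonical basis vector. Then $h(f_{S}(x_{i}))=\mathbf{1}[i\in S]$, so
\[
\sup_{S}\sum_{i}\epsilon_{i}\,h(f_{S}(x_{i}))=\sup_{S}\sum_{i\in S}\epsilon_{i}=\big|\{i:\epsilon_{i}=+1\}\big|,
\]
whose expectation is $n/2$. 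On the other hand $\sum_{i}\epsilon_{i}f_{S}(x_{i})=\sum_{i\in S}\epsilon_{i}e_{i}$ is an orthogonal sum of unit vectors, so its norm equals $\sqrt{|S|}$ for every $\epsilon$, and $\mathbb{E}\sup_{S}\|\sum_{i}\epsilon_{i}f_{S}(x_{i})\|=\sqrt{n}$. Substituted into the conjectured inequality, these two numbers demand $n/2\le K\cdot 1\cdot\sqrt{n}$, i.e.\ $K\ge\sqrt{n}/2$, and letting $n\to\infty$ defeats any universal constant.

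\smallskip

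\noindent Essentially no step is delicate; the real content is that orthogonality of the $e_{i}$ is exactly what strips the $\sqrt{n}$ off the right-hand side, while $h=\|\cdot\|$ is flat enough on unit basis vectors to let every sign $\epsilon_{i}$ on the left contribute an independent order-one term. If there is an obstacle at all, it is stylistic rather than mathematical: one might prefer to realize the same $\sqrt{n}$ gap with a much smaller family than the exponentially large $\{f_{S}\}_{S\subseteq[n]}$, but the construction above is already the shortest route to falsifying the stated conjecture.
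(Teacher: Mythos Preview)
Your counterexample is correct and follows essentially the same route as the paper: both take $h=\|\cdot\|$, use functions that send $x_{i}$ either to the basis vector $e_{i}$ or to $0$, and extract the $n/2$ versus $\sqrt{n}$ gap from orthogonality of the $e_{i}$. The only cosmetic difference is that the paper parametrises the class as the unit ball of bounded operators on $\ell_{2}$ acting on $x_{i}=e_{i}$ (the relevant operators being coordinate projections $T_{\epsilon}$), whereas you index directly by subsets $S\subseteq\{1,\dots,n\}$; your version is, if anything, the more economical packaging of the same idea.
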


This conjecture is false and will be disproved in the sequel.\bigskip

A version of the scalar contraction inequality occurs in \cite{Ledoux
Talagrand 1991}, Theorem 4.12. There the absolute value of the Rademacher
sum is used, and a necessary factor of two appears on right hand side. With
the work of Koltchinski and Panchenko \cite{Kol} and Bartlett and Mendelson 
\cite{Bartlett 2002} Rademacher averages became attractive to the machine
learning community, there was an increased interest in contraction
inqualities and it was realized that the absolute value was unnecessary for
most of the new applications. Meir and Zhang \cite{Meir} gave a nice and
simple proof of the scalar contraction inequality as stated above. Our proof
of (\ref{Vector Rademacher Contraction Inequality}) is an extension of their
method. 

\section{The vector-contraction inequality}

All random variables are assumed to be defined on some probability space $%
\left( \Omega ,\Sigma \right) $. The space $L_{p}\left( \Omega ,\Sigma
\right) $ is abbreviated $L_{p}$. We use $\ell _{2}$ to denote the Hilbert
space of square summable sequences of real numbers. The norm on $\ell _{2}$
and the Euclidean norm on $%
\mathbb{R}
^{K}$ are denoted with $\left\Vert .\right\Vert $.

A real random variable $X$ is called \textit{symmetric} if $-X$ and $X$ are
identically distributed. It is called \textit{sub-gaussian} if there exists
a constant $b=b\left( X\right) $ such that for every $\lambda \in 
\mathbb{R}
$%
\begin{equation*}
\mathbb{E}e^{\lambda X}\leq e^{\frac{\lambda ^{2}b^{2}}{2}}.
\end{equation*}%
We call $b$ the \textit{sub-gaussian parameter} of $X$. Rademacher and
standard normal variables are symmetric and sub-gaussian. \bigskip 

The following is the main result of this paper.

\begin{theorem}
\label{Theorem Main}Let $X$ be nontrivial, symmetric and subgaussian. Then
there exists a constant $C<\infty $, depending only on the distribution of $X
$, such that for any countable set $\mathcal{S}$ and functions $\psi _{i}:%
\mathcal{S}\rightarrow 
\mathbb{R}
$, $\phi _{i}:\mathcal{S}\rightarrow \ell _{2}$, $1\leq i\leq n$ satisfying%
\begin{equation*}
\forall s,s^{\prime }\in \mathcal{S}\text{, }\psi _{i}\left( s\right) -\psi
\left( s_{i}^{\prime }\right) \leq \left\Vert \phi _{i}\left( s\right) -\phi
_{i}\left( s^{\prime }\right) \right\Vert 
\end{equation*}%
we have.%
\begin{equation*}
\mathbb{E}\sup_{s\in \mathcal{S}}\sum_{i}\epsilon _{i}\psi _{i}\left(
s\right) \leq C~\mathbb{E}\sup_{s\in \mathcal{S}}\sum_{i,k}X_{ik}\phi
_{i}\left( s\right) _{k},
\end{equation*}%
where the $X_{ik}$ are independent copies of $X$ for $1\leq i\leq n$ and $%
1\leq k\leq \infty $, and $\phi _{i}\left( s\right) _{k}$ is the $k$-th
coordinate of $\phi _{i}\left( s\right) $.

If $X$ is a Rademacher variable we may choose $C=\sqrt{2}$, if $X$ is
standard normal $C=\sqrt{\pi /2}$.\bigskip 
\end{theorem}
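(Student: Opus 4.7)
The plan is to extend the Meir--Zhang induction used in the scalar contraction inequality by inserting a one-dimensional Khintchine-type lower bound at each step. The proof reduces to two ingredients: a scalar comparison lemma, and a one-variable-at-a-time replacement.

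First I would isolate the key scalar lemma: for any nontrivial symmetric sub-gaussian $X$ there exists a finite $C>0$ such that for every $v\in \ell_2$,
\begin{equation*}
\|v\|\ \le\ C\,\mathbb{E}\Bigl|\sum_k X_k v_k\Bigr|,
\end{equation*}
where $(X_k)$ are i.i.d.\ copies of $X$. In the Rademacher case Khintchine's inequality gives $C=\sqrt{2}$; in the standard Gaussian case the sum is $\mathcal{N}(0,\|v\|^2)$, giving $C=\sqrt{\pi/2}$. For general symmetric sub-gaussian $X$ one combines $\mathbb{E}(\sum_k X_k v_k)^2=\mathbb{E}X^2\cdot\|v\|^2$ with the sub-gaussian control of the fourth moment and applies Paley--Zygmund to obtain a finite $C$ depending only on the distribution of $X$.

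Next I would carry out a one-term replacement: for any fixed $A:\mathcal{S}\to\mathbb{R}$ and any index $i$,
\begin{equation*}
\mathbb{E}_{\epsilon_i}\sup_{s}\bigl[A(s)+\epsilon_i\psi_i(s)\bigr]\ \le\ \mathbb{E}_{X_i}\sup_{s}\Bigl[A(s)+C\sum_k X_{ik}\phi_i(s)_k\Bigr].
\end{equation*}
The symmetrization identity writes the left side as $\tfrac12\sup_{s,s'}[A(s)+A(s')+\psi_i(s)-\psi_i(s')]$, which by the Lipschitz hypothesis is at most $\tfrac12\sup_{s,s'}[A(s)+A(s')+\|\phi_i(s)-\phi_i(s')\|]$. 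Using the (joint) symmetry of $X_i$, the right side equals $\tfrac12\mathbb{E}_{X_i}\sup_{s,s'}[A(s)+A(s')+C\langle X_i,\phi_i(s)-\phi_i(s')\rangle]$. For any pair $(s_\ast,s_\ast')$ the inner supremum dominates both $A(s_\ast)+A(s_\ast')\pm C\langle X_i,\phi_i(s_\ast)-\phi_i(s_\ast')\rangle$ (by considering $(s_\ast,s_\ast')$ and $(s_\ast',s_\ast)$), hence also the absolute value $A(s_\ast)+A(s_\ast')+C|\langle X_i,\phi_i(s_\ast)-\phi_i(s_\ast')\rangle|$. Taking expectations and applying the scalar lemma to $w=\phi_i(s_\ast)-\phi_i(s_\ast')$ yields the pointwise bound $A(s_\ast)+A(s_\ast')+\|\phi_i(s_\ast)-\phi_i(s_\ast')\|$; now take the supremum over $(s_\ast,s_\ast')$.

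Finally, I would iterate this replacement for $i=1,\dots,n$, conditioning at each stage on the already-replaced $X_j$'s and the remaining $\epsilon_j$'s so that the required expression has the form $A(s)+\epsilon_i\psi_i(s)$ after conditioning. Summing the resulting independent pieces and factoring out $C$ gives the stated inequality. The $\ell_2$-valued case is handled by truncating to the first $K$ coordinates (for which the argument is finite-dimensional) and letting $K\to\infty$, using $L^2$-convergence of $\sum_k X_{ik}\phi_i(s)_k$. The main obstacle is the scalar lemma in the general symmetric sub-gaussian case: the sub-gaussian property gives only upper bounds on moments, so the lower bound on $\mathbb{E}|\sum_k X_kv_k|$ has to be extracted indirectly via a Paley--Zygmund-type argument, and it is this step that determines the dependence of the constant $C$ on the distribution of $X$.
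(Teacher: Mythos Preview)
Your proposal is correct and follows essentially the same route as the paper: the same Khintchine-type lower bound $\|v\|\le C\,\mathbb{E}|\sum_k X_k v_k|$ (the paper derives it via the H\"older inequality $\mathbb{E}Z^2\le(\mathbb{E}Z^4)^{1/3}(\mathbb{E}|Z|)^{2/3}$, which is the same mechanism you call Paley--Zygmund), the same one-variable replacement lemma proved by the symmetrization identity, the Lipschitz hypothesis, the swap $(s,s')\leftrightarrow(s',s)$ to pass to absolute values, and symmetry of $X$, and finally the same Meir--Zhang induction over $i=1,\dots,n$. The only cosmetic difference is that you argue the replacement step by lower-bounding the right-hand side, whereas the paper writes the identical chain as an upper bound on the left-hand side.
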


For applications in learning theory we can at once substitute a Rademacher
variable for $X$ and $\sqrt{2}$ for $C$. For $\mathcal{S}$ we take a class $%
\tciFourier $ of vector valued functions $f:\mathcal{X}\rightarrow \ell _{2}$
, for the $\phi _{i}$ the evaluation functionals on a sample $\left(
x_{1},...,x_{n}\right) $, so that $\phi _{i}\left( f\right) =f\left(
x_{i}\right) $ and for $\psi _{i}$ we take the evaluation functionals
composed with a Lipschitz loss function $h:\ell _{2}\rightarrow 
\mathbb{R}
$ of Lipschitz norm $L$. We obtain the

\begin{corollary}
\label{Corollary Rademacher}Let $\mathcal{X}$ be any set, $\left(
x_{1},...,x_{n}\right) \in \mathcal{X}^{n}$, let $\tciFourier $ be a class
of functions $f:\mathcal{X}\rightarrow \ell _{2}$ and let $h_{i}:\ell
_{2}\rightarrow 
\mathbb{R}
$ have Lipschitz norm $L$. Then%
\begin{equation*}
\mathbb{E}\sup_{f\in \tciFourier }\sum_{i}\epsilon _{i}h_{i}\left( f\left(
x_{i}\right) \right) \leq \sqrt{2}L\mathbb{E}\sup_{f\in \tciFourier
}\sum_{i,k}\epsilon _{ik}f_{k}\left( x_{i}\right) ,
\end{equation*}%
where $\epsilon _{ik}$ is an independent doubly indexed Rademacher sequence
and $f_{k}\left( x_{i}\right) $ is the $k$-th component of $f\left(
x_{i}\right) $.\bigskip 
\end{corollary}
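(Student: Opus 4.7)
The plan is to derive this corollary as a direct specialization of Theorem \ref{Theorem Main}. I would pick $\mathcal{S}$, $\psi_i$, and $\phi_i$ so that the hypothesis of the theorem is met and both sides of the asserted inequality agree, up to the factor $L$, with the two sides of the theorem's conclusion.

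Concretely, I would take $\mathcal{S}=\tciFourier$ (modulo the usual reduction to a countable dense subclass if $\tciFourier$ is uncountable, which is the standard measurability convention for Rademacher complexities), set $\psi_i(f) := h_i(f(x_i))$, and set $\phi_i(f) := L\, f(x_i) \in \ell_2$. With this choice the left-hand side of Theorem \ref{Theorem Main} is already the left-hand side of the corollary. The Lipschitz hypothesis of the theorem then reads
\[
\psi_i(f) - \psi_i(f') = h_i(f(x_i)) - h_i(f'(x_i)) \leq L\,\|f(x_i) - f'(x_i)\| = \|\phi_i(f) - \phi_i(f')\|,
\]
which follows at once from the assumption that each $h_i$ has Lipschitz norm $L$, together with the fact that scaling an $\ell_2$-valued map by $L$ scales the norm differences by $L$.

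Specializing Theorem \ref{Theorem Main} to the case where $X$ is Rademacher gives $C=\sqrt{2}$, and the conclusion becomes
\[
\mathbb{E}\sup_{f\in\tciFourier}\sum_i \epsilon_i h_i(f(x_i)) \leq \sqrt{2}\, \mathbb{E}\sup_{f\in\tciFourier}\sum_{i,k}\epsilon_{ik}\, L f_k(x_i);
\]
pulling the constant $L$ outside the sums delivers the stated bound. The hard part of the argument is not in this corollary but in Theorem \ref{Theorem Main} itself, so I expect essentially no obstacle here; the only point worth flagging is the countability convention on $\mathcal{S}$, which I would handle with a single sentence on separability of $\tciFourier$.
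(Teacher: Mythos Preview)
Your proposal is correct and is essentially the same as the paper's own derivation: the paper also takes $\mathcal{S}=\tciFourier$, $\phi_i(f)=f(x_i)$, $\psi_i(f)=h_i(f(x_i))$, specializes $X$ to Rademacher with $C=\sqrt{2}$, and reads off the corollary. Your only cosmetic difference is absorbing the Lipschitz constant by setting $\phi_i(f)=L\,f(x_i)$ rather than dividing $\psi_i$ by $L$, and you are slightly more careful than the paper in flagging the countability hypothesis on $\mathcal{S}$.
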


Clearly finite dimensional versions are obtained by restricting to the
subspace spanned by the first $K$ coordinate functions in $\ell _{2}$.

\section{Examples of loss functions\label{Section loss functions}}

We give some examples of seemingly complicated loss functions to which
Theorem \ref{Theorem Main} and Corollary \ref{Corollary Rademacher} can be
applied. These examples are not exhaustive, in fact it seems that many
applications of Slepian's inequality in the machine learning literature can
be circumvented by Theorem \ref{Theorem Main}.

\subsection{Multi-class classification}

Consider the problem of assigning to inputs taken from a space $\mathcal{X}$
a label corresponding to one of $K$ classes. We are given a labelled iid
sample $\mathbf{z}=\left( \left( x_{1},y_{1}\right) ,...,\left(
x_{n},y_{n}\right) \right) $ drawn from some unknown distribution on $%
\mathcal{X\times }\left\{ 1,...,K\right\} $, where the points $x_{i}$ are
inputs $x_{i}\in \mathcal{X}$ and the $y_{i}$ are corresponding labels, $%
y_{i}\in \left\{ 1,...,K\right\} $. Many approaches assume that there is
class $\tciFourier $ of vector valued functions $f:\mathcal{X\rightarrow 
\mathbb{R}
}^{K^{\prime }}$, where $K^{\prime }=o\left( K\right) $ (typically $%
K^{\prime }=K$, for 1-versus-all classification, or $K^{\prime }=K-1$ for
simplex coding \cite{Rosasco 2012}), a classification rule $c:\mathcal{%
\mathbb{R}
}^{K^{\prime }}\rightarrow \left\{ 1,...,K\right\} $, and for each label $%
k\in \left\{ 1,...,c\right\} $ a loss function $\ell _{k}:%
\mathbb{R}
^{K}\rightarrow 
\mathbb{R}
_{+}$. The loss function $\ell _{k}$ is designed so as to upper bound, or
approximate the indicator function of the set $\left\{ z\in 
\mathbb{R}
^{K^{\prime }}:c\left( z\right) \neq k\right\} $ (see \cite{Crammer 2002}).
In most cases the loss functions are Lipschitz on $%
\mathbb{R}
^{K}$ relative to the euclidean norm, with some Lipschitz constant $L$. The
empirical error incurred by a function $f\in \tciFourier $ is 
\begin{equation*}
\frac{1}{n}\sum_{i}\ell _{y_{i}}\left( f\left( x_{i}\right) \right) .
\end{equation*}%
The Rademacher complexity, which would lead to the uniform bound on the
estimation error, is%
\begin{equation*}
\mathbb{E}\sup_{f\in \tciFourier }\sum_{i}\epsilon _{i}\ell _{y_{i}}\left(
f\left( x_{i}\right) \right) .
\end{equation*}%
Using Corollary \ref{Corollary Rademacher} with $h_{i}=\ell _{y_{i}}$ we can
immediately eliminate the loss functions $\ell _{y_{i}}$ 
\begin{equation*}
\mathbb{E}\sup_{f\in \tciFourier }\sum_{i}\epsilon _{i}\ell _{y_{i}}\left(
f\left( x_{i}\right) \right) \leq \sqrt{2}L\mathbb{E}\sup_{f\in \tciFourier
}\sum_{i,k}\epsilon _{ik}f_{k}\left( x_{i}\right) .
\end{equation*}%
How we proceed to further bound this now depends on the nature of the
vector-valued class $\tciFourier $. Some techniques to bound the Rademacher
complexity of vector valued classes are sketched in Section \ref{Section
Bounding} below.

\subsection{K-means clustering}

Let $H$ be a Hilbert space and $\mathbf{x}=\left( x_{1},...,x_{n}\right) $ a
sample of points in the unit ball $B_{1}$ of $H$. The algorithm seeks
centers $c=\left( c_{1},...,c_{K}\right) \in \mathcal{S}=B_{1}^{K}$ to
represent the sample.%
\begin{equation*}
c^{\ast }=\arg \min_{\left( c_{1},...,c_{K}\right) \in \mathcal{S}}\frac{1}{n%
}\sum_{i=1}^{n}\min_{k=1}^{K}\left\Vert x_{i}-c_{k}\right\Vert ^{2}.
\end{equation*}%
The corresponding Rademacher average to bound the estimation error is%
\begin{equation*}
R\left( \mathcal{S},\mathbf{x}\right) =\mathbb{E}\sup_{c\in \mathcal{S}%
}\sum_{i}\epsilon _{i}\min_{k=1}^{K}\left\Vert x_{i}-c_{k}\right\Vert ^{2}=%
\mathbb{E}\sup_{c\in \mathcal{S}}\sum_{i}\epsilon _{i}\psi _{i}\left(
c\right) ,
\end{equation*}%
where we define $\psi _{i}\left( c\right) =\min_{k}\left\Vert
x_{i}-c_{k}\right\Vert ^{2}$ in preparation of an application of Theorem \ref%
{Theorem Main}. The next step is to search for an appropriate Lipschitz
property of the $\psi _{i}$. We have, for $c,c^{\prime }\in \mathcal{S}$,%
\begin{eqnarray*}
\psi _{i}\left( c\right) -\psi _{i}\left( c^{\prime }\right) 
&=&\min_{k}\left\Vert x_{i}-c_{k}\right\Vert ^{2}-\min_{k}\left\Vert
x_{i}-c_{k}^{\prime }\right\Vert ^{2} \\
&\leq &\max_{k}\left\Vert x_{i}-c_{k}\right\Vert ^{2}-\left\Vert
x_{i}-c_{k}^{\prime }\right\Vert ^{2} \\
&\leq &\left( \sum_{k}\left( \left\Vert x_{i}-c_{k}\right\Vert
^{2}-\left\Vert x_{i}-c_{k}^{\prime }\right\Vert ^{2}\right) ^{2}\right)
^{1/2} \\
&=&\left\Vert \phi _{i}\left( c\right) -\phi _{i}\left( c^{\prime }\right)
\right\Vert .
\end{eqnarray*}%
Where we defined $\phi _{i}:\mathcal{S\rightarrow 
\mathbb{R}
}^{K}$ by $\phi _{i}\left( c\right) =\left( \left\Vert
x_{i}-c_{1}\right\Vert ^{2},...,\left\Vert x_{i}-c_{K}\right\Vert
^{2}\right) $. We can now apply Theorem \ref{Theorem Main} with $L=1$ and
obtain 
\begin{eqnarray*}
2^{-1/2}R\left( \mathcal{S},\mathbf{x}\right)  &\leq &\mathbb{E}\sup_{c\in 
\mathcal{S}}\sum_{ik}\epsilon _{ik}\left\Vert x_{i}-c_{k}\right\Vert ^{2} \\
&\leq &2\mathbb{E}\sup_{c\in \mathcal{S}}\sum_{ik}\epsilon _{ik}\left\langle
x_{i},c_{k}\right\rangle +\mathbb{E}\sup_{c\in \mathcal{S}}\sum_{ik}\epsilon
_{ik}\left\Vert c_{k}\right\Vert ^{2} \\
&\leq &K\left( 2\mathbb{E}\left\Vert \sum_{i}\epsilon _{i}x_{i}\right\Vert +%
\mathbb{E}\left\vert \sum_{i}\epsilon _{i}\right\vert \right)  \\
&\leq &3K\sqrt{n}.
\end{eqnarray*}%
Dividing by $n$ we obtain generalization bounds as in \cite{Lugosi 2008} or 
\cite{Maurer 2010}. In this simple case it was very easy to explicitely
bound the complexity of the vector-valued class. 

\subsection{Learning to learn or meta-learning}

With input space $\mathcal{X}$ suppose we have a class $\mathcal{H}$ of
feature maps $h:\mathcal{X}\rightarrow \mathcal{Y\subseteq 
\mathbb{R}
}^{K}$ and a loss class $\tciFourier $ of functions $f:\mathcal{Y}%
\rightarrow \left[ 0,1\right] $. The loss class could be used for
classification or function estimation or also in some unsupervised setting.
We assume that every function $f\in \tciFourier $ is Lipschitz with
Lipschitz constant $L$ and that $\tciFourier $ is small enough for good
generalization in the sense that for some $B<\infty $%
\begin{equation}
\mathbb{E}_{Y_{1},...,Y_{n}}\left[ \sup_{f\in \tciFourier }\mathbb{E}%
_{Y}f\left( Y\right) -\frac{1}{n}\sum_{i=1}^{n}f\left( Y_{i}\right) \right]
\leq \frac{B}{\sqrt{n}}  \label{LTL-bound 1}
\end{equation}%
for any $\mathcal{Y}$-valued random variable $Y$ and iid copies $%
Y_{1},...,Y_{n}$. Such conditions might be established using standard
techniques, for example also Rademacher complexities.\bigskip 

We now want to learn a feature map $h\in \mathcal{H}$, such that the
algorithm using empirical risk minimization (ERM) with the function class $%
\tciFourier \circ h=\left\{ x\mapsto f\left( h\left( x\right) \right) :f\in
\tciFourier \right\} $ gives good results on future, yet unseen, tasks. Of
course this depends on the tasks in question, and a good feature map $h$ can
only be chosen on the basis of some kind of experience made with these tasks.

To formalize this Baxter \cite{Baxter 2000} has introduced the notion of an 
\textit{environment }$\eta $, which is a distribution on the set of tasks,
where each task $t$ is characterized by some distribution $\mu _{t}$ (e.g.
on inputs and outputs). For each task $t\sim \eta $ we can then also draw an
iid training sample $\mathbf{x}^{t}=\left( x_{1}^{t},...,x_{n}^{t}\right)
\sim \mu _{t}^{n}$. In this way the environment also induces a distribution
on the set of training samples. Now we can make our problem more precise:

Suppose we have $T$ tasks and corresponding training samples $\mathbf{\bar{x}%
}=\left( \mathbf{x}^{1},...,\mathbf{x}^{T}\right) $ drawn iid from the
environment $\eta $. For $h\in \mathcal{H}$ let 
\begin{equation*}
\psi _{t}\left( h\right) =\min_{f\in \tciFourier }\frac{1}{n}%
\sum_{i=1}^{n}f\left( h\left( x_{i}^{t}\right) \right) 
\end{equation*}%
be the training error obtained by the use of the feature map $h$. We propose
to use the feature map%
\begin{equation*}
\hat{h}=\arg \min_{h\in \mathcal{H}}\frac{1}{T}\sum_{t=1}^{T}\psi _{t}\left(
h\right) .
\end{equation*}%
To give a performance guarantee for ERM using $\tciFourier \circ \hat{h}$,
we now seek to bound the expected \textit{training error} $\mathbb{E}_{t\sim
\eta }\left[ \psi _{t}\left( h\right) \right] $ for a new task drawn from
the environment (with corresponding training sample), in terms of the
average of the observed $\psi _{t}\left( h\right) $, uniformly over the set
of feature maps $h\in \mathcal{H}$. Observe that, given the bound on (\ref%
{LTL-bound 1}) such a bound will also give a bound on the expected true
error when using $\hat{h}$ on new tasks in the environment $\eta $, a
meta-generalization bound, so to speak (for more details on this type of
argument see \cite{Maurer 2009} or \cite{Maurer 2015}).

The Rademacher average in question is 
\begin{equation*}
R\left( \mathcal{H},\mathbf{\bar{x}}\right) =\mathbb{E}\sup_{h\in \mathcal{H}%
}\sum_{t=1}^{T}\epsilon _{t}\psi _{t}\left( h\right) .
\end{equation*}%
To apply Theorem \ref{Theorem Main} we look for a Lipschitz property of the $%
\psi _{t}$. For $h\in \mathcal{H}$ define $\phi _{t}\left( h\right) \in 
\mathbb{R}
^{Kn}$ by $\left[ \phi _{t}\left( h\right) \right] _{k,i}=h_{k}\left(
x_{i}^{t}\right) $. Then for $h,h^{\prime }\in \mathcal{H}$ 
\begin{eqnarray*}
\psi _{t}\left( h\right) -\psi _{t}\left( h^{\prime }\right)  &\leq
&\max_{f\in \tciFourier }\frac{1}{n}\sum_{i=1}^{n}f\left( h\left(
x_{i}\right) \right) -f\left( h\left( x_{i}\right) \right)  \\
&\leq &\frac{L}{n}\sum_{i=1}^{n}\left\Vert h\left( x_{i}\right) -h^{\prime
}\left( x_{i}\right) \right\Vert \leq \frac{L}{\sqrt{n}}\left\Vert \phi
_{t}\left( h\right) -\phi _{t}\left( h^{\prime }\right) \right\Vert ,
\end{eqnarray*}%
where the first inequality comes from the Lipschitz property of the
functions in $\tciFourier $ and the second from Jensen's inequality. From
Theorem \ref{Theorem Main} we conclude that%
\begin{equation*}
R\left( \mathcal{H},\mathbf{\bar{x}}\right) \leq \frac{L}{\sqrt{n}}\mathbb{E}%
\sup_{h\in \mathcal{H}}\sum_{tki}\epsilon _{tki}h_{k}\left( x_{i}^{t}\right)
.
\end{equation*}%
How to proceed depends on the nature of the feature maps in $\mathcal{H}$.
Examples are given in \cite{Maurer 2009} or \cite{Maurer 2015}, but see also
the next section.

\section{Bounding the Rademacher complexity of vector-valued classes\label%
{Section Bounding}}

At first glance the expression 
\begin{equation*}
\mathbb{E}\sup_{f\in \tciFourier }\sum_{i,k}\epsilon _{ik}f_{k}\left(
x_{i}\right) 
\end{equation*}%
appears difficult to bound. Nevertheless there are some general techniques
which can be used, such as the reduction to scalar classes, or the use of
duality for linear classes. We also give an example in a truly infinite
dimensional setting.

\subsection{Reduction to component classes}

Suppose $\tciFourier _{1},...,\tciFourier _{K}$ are classes of scalar valued
functions and define a vector-valued class $\prod_{k}\tciFourier _{k}$ with
values in $%
\mathbb{R}
^{K}$ by 
\begin{equation*}
\prod_{k}\tciFourier _{k}=\left\{ x\mapsto \left( f_{1}\left( x\right)
,...,f_{K}\left( x\right) \right) :f_{k}\in \tciFourier _{k}\right\} .
\end{equation*}%
Then, since the constraints are independent, the Rademacher average of the
product class%
\begin{equation}
\mathbb{E}\sup_{f\in \prod_{k}\tciFourier _{k}}\sum_{i,k}\epsilon
_{ik}f_{k}\left( x_{i}\right) =\sum_{k}\mathbb{E}\sup_{f\in \tciFourier
_{k}}\sum_{i}\epsilon _{i}f\left( x_{i}\right) 
\label{product class identity}
\end{equation}%
is just the sum of the Rademacher averages of the scalar valued component
classes. Now let $\tciFourier $ be any function class with values in $%
\mathbb{R}
^{K}$ and for $k\in \left\{ 1,...,K\right\} $ define a scalar-valued class $%
\tciFourier _{k}$ by%
\begin{equation*}
\tciFourier _{k}=\left\{ x\mapsto f_{k}\left( x\right) :f=\left(
f_{1},...,f_{k},...,f_{K}\right) \in \tciFourier \right\} .
\end{equation*}%
Then $\tciFourier \subseteq \prod_{k}\tciFourier _{k}$, so by the identity (%
\ref{product class identity}) 
\begin{equation*}
\mathbb{E}\sup_{f\in \tciFourier }\sum_{i,k}\epsilon _{ik}f_{k}\left(
x_{i}\right) \leq \sum_{k}\mathbb{E}\sup_{f\in \tciFourier
_{k}}\sum_{i}\epsilon _{i}f\left( x_{i}\right) .
\end{equation*}%
This is loose in many interesting cases, but for product classes it is
unimprovable.

\subsection{Linear classes defined by norms}

Let $H$ be a separable real Hilbert-space and let $\mathcal{B}\left( H,%
\mathbb{R}
^{K}\right) $ be the set of bounded linear transformations from $H$ to $%
\mathbb{R}
^{K}$. Then every member of $\mathcal{B}\left( H,%
\mathbb{R}
^{K}\right) $ is characterized by a sequence of weight vectors $\left(
w_{1},...,w_{K}\right) $ with $w_{k}\in H$. Let $\left\Vert \left\Vert
.\right\Vert \right\Vert $ be a norm on $\mathcal{B}\left( H,%
\mathbb{R}
^{K}\right) $ with dual norm $\left\Vert \left\Vert .\right\Vert \right\Vert
_{\ast }$. Fix some real number $B$, and define a class $\tciFourier $ of
functions from $H$ to $%
\mathbb{R}
^{K}$ by 
\begin{equation*}
\tciFourier =\left\{ x\mapsto Wx:W\in \mathcal{B}\left( H,%
\mathbb{R}
^{K}\right) ,\left\Vert \left\Vert W\right\Vert \right\Vert \leq B\right\} .
\end{equation*}%
Then 
\begin{eqnarray*}
\mathbb{E}\sup_{f\in \tciFourier }\sum_{i,k}\epsilon _{ik}f_{k}\left(
x_{i}\right)  &=&\mathbb{E}\sup_{\left\Vert \left\Vert \left(
w_{1},...,w_{K}\right) \right\Vert \right\Vert \leq B}\sum_{k}\left\langle
w_{k},\sum_{i}\epsilon _{ik}x_{i}\right\rangle  \\
&=&\mathbb{E}\sup_{\left\Vert \left\Vert W\right\Vert \right\Vert \leq
B}tr\left( D^{\ast }W\right) \leq B~\mathbb{E}\left\Vert \left\Vert D^{\ast
}\right\Vert \right\Vert _{\ast },
\end{eqnarray*}%
where $D\in \mathcal{B}\left( H,%
\mathbb{R}
^{K}\right) $ is the random transformation%
\begin{equation*}
v\mapsto \left( \left\langle v,\sum_{i}\epsilon _{i1}x_{i}\right\rangle
,...,\left\langle v,\sum_{i}\epsilon _{iK}x_{i}\right\rangle \right) .
\end{equation*}%
The details of bounding $\mathbb{E}\left\Vert \left\Vert D^{\ast
}\right\Vert \right\Vert _{\ast }$ then depend on the nature of the norm $%
\left\Vert \left\Vert .\right\Vert \right\Vert $. The simplest case is the
Hilbert-Schmidt or Frobenius norm, where%
\begin{equation*}
\mathbb{E}\left\Vert \left\Vert D^{\ast }\right\Vert \right\Vert _{\ast }=%
\mathbb{E}\sqrt{\sum_{k}\left\Vert \sum_{i}\epsilon _{ik}x_{i}\right\Vert
^{2}}\leq \sqrt{K\sum_{i}\left\Vert x_{i}\right\Vert ^{2}}.
\end{equation*}%
More interesting are mixed norms or the trace norm. A valuable ressource for
this approach is \cite{KakadeEtAl 2012}. 

\subsection{Operator valued kernels}

We give an example in a truly infinite dimensional setting and refer to the
mechanism of learning vector valued functions as exposed in \cite{Massi 2005}%
. There is a generic separable Hilbert space $H$ and a kernel $\kappa :%
\mathcal{X}\times \mathcal{X\rightarrow L}\left( H\right) $ satisfying
certain positivity and regularity properties as decribed in \cite{Massi 2005}%
, where $\mathcal{X}$ is some arbitrary input space. Then there exists an
induced feature-map $\Phi :\mathcal{X\rightarrow L}\left( \ell _{2},H\right) 
$ such that the kernel is given by 
\begin{equation*}
\kappa \left( x,y\right) =\Phi \left( x\right) \Phi ^{\ast }\left( y\right) 
\end{equation*}%
and the class of $H$-valued functions to be learned is 
\begin{equation*}
\left\{ x\mapsto f_{w}\left( x\right) =\Phi \left( x\right) w:\left\Vert
w\right\Vert \leq B\right\} ,
\end{equation*}%
where also $\left\Vert f_{w}\left( x\right) \right\Vert _{H}=\left\Vert
w\right\Vert \sqrt{\kappa \left( x,x\right) }$. Then for any sample $\mathbf{%
x}=\left( x_{1},...,x_{n}\right) \in \mathcal{X}^{n}$ and $L$-Lipschitz loss
functions $h_{i}:H\rightarrow 
\mathbb{R}
$ we have%
\begin{eqnarray*}
\mathbb{E}\sup_{\left\Vert w\right\Vert \leq 1}\sum_{i}\epsilon
_{i}h_{i}\left( \Phi \left( x_{i}\right) w\right)  &\leq &\sqrt{2}L\mathbb{E}%
\sup_{\left\Vert w\right\Vert \leq 1}\sum_{i,k}\epsilon _{ik}\left\langle
\Phi \left( x_{i}\right) w,e_{k}\right\rangle  \\
&=&\sqrt{2}L\mathbb{E}\sup_{\left\Vert w\right\Vert \leq 1}\left\langle
w,\sum_{i,k}\epsilon _{ik}\Phi \left( x_{i}\right) ^{\ast
}e_{k}\right\rangle  \\
&\leq &\sqrt{2}LB\mathbb{E}\left\Vert \sum_{i,k}\epsilon _{ik}\Phi \left(
x_{i}\right) ^{\ast }e_{k}\right\Vert  \\
&\leq &\sqrt{2}LB\left( \sum_{i,k}\left\Vert \Phi \left( x_{i}\right) ^{\ast
}e_{k}\right\Vert ^{2}\right) ^{1/2} \\
&=&\sqrt{2}LB\left( \sum_{i}tr\text{~}\kappa \left( x_{i},x_{i}\right)
\right) ^{1/2}.
\end{eqnarray*}%
Here we used Corollary \ref{Corollary Rademacher} in the first and
Cauchy-Schwarz in the second inequality. Then we use Jensen's inequality
combined with orthonormality of the Rademacher sequence. For the result to
make sense we need the $\kappa \left( x_{i},x_{i}\right) $ to be trace
class. In the case $H=%
\mathbb{R}
$ we obtain the standard result for the scalar case, as in \cite{Bartlett
2002}. The bound above can be used to prove a non-asymptotic upper bound for
the algorithm described in \cite{Caponnetto 2007}, where vector-valued
regression with square loss and Tychonov regularization in $\left\Vert
f_{w}\right\Vert =\left\Vert w\right\Vert $ is considered.

\section{Proof of the contraction inequality\label{Section Proofs}}

We start with some simple observations on subgaussian random variables.

\begin{lemma}
\label{Lemma Subgaussian}If $X$ is subgaussian with subgaussian-constant $b$
and $v$ is a unit vector in $%
\mathbb{R}
^{K}$ then%
\begin{equation*}
\Pr \left\{ \left\vert \sum_{k=1}^{K}v_{k}X_{k}\right\vert >t\right\} \leq
2e^{-t^{2}/\left( 2b^{2}\right) },
\end{equation*}%
where $X_{1},...,X_{K}$ are independent copies of $X$.
\end{lemma}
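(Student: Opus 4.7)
The plan is to reduce this to a standard one-dimensional subgaussian tail bound via the Cramér–Chernoff method, after first verifying that a linear combination of independent subgaussian variables with unit $\ell_2$ coefficient vector is itself subgaussian with the same parameter $b$.

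First I would compute the moment generating function of $S := \sum_{k=1}^K v_k X_k$. By independence of the $X_k$,
\begin{equation*}
\mathbb{E} e^{\lambda S} = \prod_{k=1}^K \mathbb{E} e^{\lambda v_k X_k} \leq \prod_{k=1}^K e^{\lambda^2 v_k^2 b^2 / 2} = e^{\lambda^2 b^2 / 2},
\end{equation*}
where in the inequality I have applied the subgaussian hypothesis to $X_k$ with the scalar $\lambda v_k$ in place of $\lambda$, and in the final equality I have used $\sum_k v_k^2 = 1$. Thus $S$ is itself subgaussian with parameter $b$.

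Next I apply Markov's inequality to the exponentiated random variable: for any $\lambda > 0$,
\begin{equation*}
\Pr\{ S > t \} = \Pr\{ e^{\lambda S} > e^{\lambda t} \} \leq e^{-\lambda t} \mathbb{E} e^{\lambda S} \leq e^{-\lambda t + \lambda^2 b^2 / 2}.
\end{equation*}
Optimizing over $\lambda > 0$ at $\lambda = t/b^2$ yields $\Pr\{ S > t \} \leq e^{-t^2 / (2b^2)}$. Since $X$ is symmetric (in fact this is not needed here: it suffices that $-X$ is also subgaussian with parameter $b$, which follows from the subgaussian definition by replacing $\lambda$ with $-\lambda$), the same bound holds for $-S$, and a union bound over the two tails gives the factor of $2$.

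There is really no hard step here; the only thing to be careful about is the bookkeeping on the scaling in $\mathbb{E} e^{\lambda v_k X_k} \leq e^{(\lambda v_k)^2 b^2 / 2}$, which is exactly where the unit-vector hypothesis is consumed. The symmetry assumption on $X$ mentioned in Theorem 2 is not needed in this lemma, since the subgaussian definition $\mathbb{E} e^{\lambda X} \leq e^{\lambda^2 b^2/2}$ is already two-sided in $\lambda$.
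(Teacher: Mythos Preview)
Your proof is correct and follows essentially the same route as the paper's: compute the moment generating function of $S$ using independence and the unit-vector condition, apply Markov's inequality, optimize in $\lambda$, and finish with a union bound. Your remark that symmetry of $X$ is unnecessary here is also correct.
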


\begin{proof}
For any $\lambda \in 
\mathbb{R}
$%
\begin{eqnarray*}
\mathbb{E}\exp \left( \lambda \sum_{k}v_{k}X_{k}\right)  &=&\prod_{k}\mathbb{%
E}\exp \left( \lambda v_{k}X_{k}\right)  \\
&\leq &\prod_{k}\exp \left( \lambda ^{2}\frac{b^{2}}{2}v_{k}^{2}\right)  \\
&=&\exp \left( \frac{\lambda ^{2}b^{2}}{2}\right) \text{.}
\end{eqnarray*}%
The first line follows from independence of the $X_{i}$, the next because $X$
is sub-gaussian, and the last because $v$ is a unit vector. It then follows
from Markov's inequality that%
\begin{eqnarray*}
\Pr \left\{ \sum_{k}v_{k}X_{k}>t\right\}  &\leq &\mathbb{E}\exp \left(
\lambda \left( \sum_{k}v_{k}X_{k}-t\right) \right)  \\
&\leq &\exp \left( \frac{\lambda ^{2}b^{2}}{2}-\lambda t\right)  \\
&=&e^{-t^{2}/\left( 2b^{2}\right) },
\end{eqnarray*}%
where the last identity is obtained by optimizing in $\lambda $. The
conclusion follows from a union bound.\bigskip 
\end{proof}

For the purpose of vector-contraction inequalities the crucial property of
sub-gaussian random variables is the following.

\begin{proposition}
\label{Proposition Bound}Let $X$ be nontrivial and subgaussian with
subgaussian parameter $b$ and let $\mathbf{X=}\left(
X_{1},...,X_{K},...\right) $ be an infinite sequence of independent copies
of $X$. Then

(i) For every $v\in \ell _{2}$ the sequence of random variables $%
Y_{K}=\sum_{i=1}^{K}X_{k}v_{k}$ converges in $L_{p}$ for $0<p<\infty $ to a
random variable denoted by $\sum_{k=1}^{\infty }X_{k}v_{k}$. The map $%
v\mapsto \sum_{k=1}^{\infty }X_{k}v_{k}$ is a bounded linear transformation
from $\ell _{2}$ to $L_{p}$.

(ii) There exists a constant $C<\infty $ such that for every $v\in \ell _{2}$
\begin{equation*}
\left\Vert v\right\Vert \leq C\mathbb{E}\left\vert \sum_{k=1}^{\infty
}X_{k}v_{k}\right\vert .
\end{equation*}
\end{proposition}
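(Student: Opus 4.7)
The strategy is to use Lemma \ref{Lemma Subgaussian} to get $L_p$ moment control on finite sums, complete to a limit to obtain part (i), and then combine the resulting upper bound with Lyapunov interpolation to extract the lower bound in (ii).

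For part (i), I would first note that for any finitely supported $v$, Lemma \ref{Lemma Subgaussian} applied to the unit vector $v/\Vert v\Vert$ gives
\begin{equation*}
\Pr\left\{\left|\sum_{k=1}^{K} X_k v_k\right| > t\right\} \le 2\exp\left(-\frac{t^2}{2 b^2 \Vert v\Vert^2}\right).
\end{equation*}
Integrating the tail via $\mathbb{E}|Y_K|^p = p\int_0^\infty t^{p-1}\Pr\{|Y_K|>t\}\,dt$ yields $\Vert Y_K\Vert_p \le c_p\, b\, \Vert v\Vert$ for a universal $c_p<\infty$, valid for every $0<p<\infty$. To show $(Y_K)$ is Cauchy in $L_p$, I would apply the same estimate to the increment $Y_K - Y_M = \sum_{k=M+1}^{K} X_k v_k$, whose effective sub-gaussian scale is $(\sum_{k=M+1}^{K} v_k^2)^{1/2}$; because $v\in\ell_2$, this tends to zero, so $\Vert Y_K - Y_M\Vert_p \to 0$. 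The limit defines $Y := \sum_{k=1}^\infty X_k v_k \in L_p$, and the uniform estimate $\Vert Y\Vert_p \le c_p b\Vert v\Vert$ together with linearity on finitely supported $v$ extends by density to all of $\ell_2$, giving boundedness of $v\mapsto Y$ as an operator $\ell_2\to L_p$.

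For part (ii), the idea is Lyapunov interpolation between $L_1$, $L_2$, and $L_4$. For $\Vert v\Vert = 1$ set $Y = \sum_k X_k v_k$; since $X$ is symmetric, $\mathbb{E}X = 0$ and by independence $\mathbb{E}Y^2 = \mathbb{E}X^2$. Non-triviality combined with symmetry forces $\mathbb{E}X^2 > 0$. The interpolation $\Vert Y\Vert_2 \le \Vert Y\Vert_1^{1/3}\Vert Y\Vert_4^{2/3}$ rearranges to
\begin{equation*}
\mathbb{E}|Y| \ge \frac{\Vert Y\Vert_2^3}{\Vert Y\Vert_4^2} \ge \frac{(\mathbb{E}X^2)^{3/2}}{(c_4 b)^2},
\end{equation*}
where the upper bound on $\Vert Y\Vert_4$ comes from part (i). The right-hand side is a strictly positive constant depending only on the distribution of $X$, and scaling $v$ gives the conclusion with $C$ its reciprocal.

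The only subtle step is the lower bound in (ii): one needs both the non-degeneracy of the second moment (ensured by non-triviality and symmetry) and a quantitative fourth-moment upper bound independent of $v$ (ensured by sub-gaussianity via part (i)). The Cauchy argument in (i) and the interpolation closure of (ii) are then both routine, provided one first passes through Lemma \ref{Lemma Subgaussian}.
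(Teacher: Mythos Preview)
Your proof is correct and follows essentially the same route as the paper: tail integration via Lemma~\ref{Lemma Subgaussian} to get the $L_p$ bound and Cauchy property for part~(i), and the H\"older/Lyapunov interpolation $\|Y\|_2^2 \le \|Y\|_4^{4/3}\|Y\|_1^{2/3}$ combined with $\mathbb{E}Y^2=\mathbb{E}X^2$ for part~(ii). One minor remark: the proposition as stated does not assume symmetry, but the sub-gaussian definition used in the paper already forces $\mathbb{E}X=0$, which is all that is needed for the second-moment identity.
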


The proof, given below, is easy and modeled after the proof of the
Khintchine inequalities in \cite{Ledoux Talagrand 1991}.

For Rademacher variables the best constant is $C=\sqrt{2}$ (\cite{Szarek
1976}, see also inequality (4.3) in \cite{Ledoux Talagrand 1991} or Theorem
5.20 in \cite{Boucheron13}). In the standard normal case the inequality in
(ii) becomes equality with $C=\sqrt{\pi /2}$. This is an easy consequence of
the rotation invariance of isonormal processes.

\begin{proof}[Proof of Proposition \protect\ref{Proposition Bound}]
Let $X$ have subgaussian-constant $b$.

(i) Assume first that $\left\Vert v\right\Vert =1$. For $0<p<\infty $ it
follows fom integration by parts that for any $v\in \ell _{2}$%
\begin{eqnarray*}
\mathbb{E}\left\vert \sum_{k=1}^{K}v_{k}X_{k}\right\vert ^{p}
&=&p\int_{0}^{\infty }t^{p-1}\Pr \left\{ \left\vert
\sum_{k=1}^{K}v_{k}X_{k}\right\vert >t\right\} dt \\
&\leq &2p\int_{0}^{\infty }t^{p-1}e^{-t^{2}/\left( 2b^{2}\right) }dt,
\end{eqnarray*}%
where the last inequality follows from Lemma \ref{Lemma Subgaussian}. The
last integral is finite and depends only on $p$ and $b$. By homogeneity it
follows that for some constant $B$ and any $v\in \ell _{2}$ 
\begin{equation*}
\left( \mathbb{E}\left\vert \sum_{k=1}^{K}v_{k}X_{k}\right\vert ^{p}\right)
^{1/p}\leq B\left( \sum_{k=1}^{K}v_{k}^{2}\right) ^{1/2}
\end{equation*}%
which implies convergence in $L_{p}$. This proves existence and boundedness
of the map $v\mapsto \sum_{k=1}^{\infty }v_{k}X_{k}$. Linearity is
established with standard arguments.

(ii) Let $C$ be the finite constant 
\begin{equation*}
C:=\frac{\left( 8\int_{0}^{\infty }t^{3}e^{-t^{2}/\left( 2b^{2}\right)
}dt\right) ^{1/2}}{\mathbb{E}\left[ X^{2}\right] ^{3/2}}.
\end{equation*}%
It suffices to prove the conclusion for unit vectors $v\in \ell _{2}$. From
the first part we obtain%
\begin{equation*}
\mathbb{E}\left\vert \sum_{k}v_{k}X_{k}\right\vert ^{4}\leq
8\int_{0}^{\infty }t^{3}e^{-t^{2}/\left( 2b^{2}\right) }dt
\end{equation*}%
Combined with Hoelder's inequality this implies 
\begin{eqnarray*}
\mathbb{E}\left[ X^{2}\right] &=&\mathbb{E}\left( \left\vert
\sum_{k}v_{k}X_{k}\right\vert ^{2}\right) =\mathbb{E}\left( \left\vert
\sum_{k}v_{k}X_{k}\right\vert ^{4/3}\left\vert \sum_{k}v_{k}X_{k}\right\vert
^{2/3}\right) \\
&\leq &\left( \mathbb{E}\left\vert \sum_{k}v_{k}X_{k}\right\vert ^{4}\right)
^{1/3}\left( \mathbb{E}\left\vert \sum_{k}v_{k}X_{k}\right\vert \right)
^{2/3} \\
&\leq &\left( 8\int_{0}^{\infty }t^{3}e^{-t^{2}/\left( 2b^{2}\right)
}dt\right) ^{1/3}\left( \mathbb{E}\left\vert \sum_{k}v_{k}X_{k}\right\vert
\right) ^{2/3}.
\end{eqnarray*}%
Dividing by $\mathbb{E}\left[ X^{2}\right] $ and taking the power of $3/2$
gives 
\begin{equation*}
1\leq C\mathbb{E}\left\vert \sum_{k}v_{k}X_{k}\right\vert \text{.}
\end{equation*}%
\bigskip
\end{proof}

To prove the main vector contraction result we first consider only a single
Rademacher variable $\epsilon $ and then complete the proof by
induction.\bigskip

\begin{lemma}
\label{Lemma Key}Let $X$ be nontrivial, symmetric and subgaussian. Then
there exists a constant $C<\infty $ such that for any countable set $%
\mathcal{S}$ and functions $\psi :\mathcal{S}\rightarrow 
\mathbb{R}
$, $\phi :\mathcal{S}\rightarrow \ell _{2}$ and $f:\mathcal{S\rightarrow 
\mathbb{R}
}$ satisfying%
\begin{equation*}
\forall s,s^{\prime }\in \mathcal{S}\text{, }\psi \left( s\right) -\psi
\left( s^{\prime }\right) \leq \left\Vert \phi \left( s\right) -\phi \left(
s^{\prime }\right) \right\Vert
\end{equation*}%
we have.%
\begin{equation*}
\mathbb{E}\sup_{s\in \mathcal{S}}\epsilon \psi \left( s\right) +f\left(
s\right) \leq C\mathbb{E}\sup_{s\in \mathcal{S}}\sum_{k}X_{k}\phi \left(
s\right) _{k}+f\left( s\right) ,
\end{equation*}%
where the $X_{k}$ are independent copies of $X$ for $1\leq k\leq \infty $,
and $\phi \left( s\right) _{k}$ is the $k$-th coordinate of $\phi \left(
s\right) $.
\end{lemma}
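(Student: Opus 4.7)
The plan is to adapt Meir and Zhang's argument for the scalar contraction inequality \cite{Meir}, replacing their one-dimensional Lipschitz comparison by the Khintchine-type estimate of Proposition \ref{Proposition Bound}(ii). The key point is that only the single Rademacher variable $\epsilon$ is averaged, and its two values $\pm 1$ let us rewrite the expectation as a supremum over pairs.

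First I would carry out the Meir-Zhang symmetrization: since $\epsilon$ takes the values $\pm 1$ with equal probability,
\begin{equation*}
\mathbb{E}_\epsilon \sup_{s\in \mathcal{S}}[\epsilon \psi(s)+f(s)] = \tfrac{1}{2}\sup_{s,s'\in\mathcal{S}}[\psi(s)-\psi(s')+f(s)+f(s')].
\end{equation*}
The Lipschitz hypothesis then bounds $\psi(s)-\psi(s')$ by $\|\phi(s)-\phi(s')\|$, and Proposition \ref{Proposition Bound}(ii) applied to $v=\phi(s)-\phi(s')$ gives $\|\phi(s)-\phi(s')\|\le C\,\mathbb{E}\bigl|\sum_k X_k(\phi(s)_k-\phi(s')_k)\bigr|$. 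Since $f(s)+f(s')$ is deterministic, the supremum over $(s,s')$ becomes a supremum of expectations, and the standard inequality $\sup\mathbb{E}\le\mathbb{E}\sup$ pushes the supremum inside the expectation over $X$.

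What remains is a symmetry clean-up. Swapping $s$ and $s'$ flips the sign of the signed sum $\sum_k X_k(\phi(s)_k-\phi(s')_k)$ while leaving $f(s)+f(s')$ unchanged, so the absolute value inside the supremum may be dropped without loss. The resulting $\sup_{s,s'}$ then splits into $\sup_s + \sup_{s'}$, and because $X$ is symmetric the two terms have the same expectation. This produces a factor of $2$ that cancels the initial $\tfrac{1}{2}$, leaving the bound $\mathbb{E}\sup_s[\epsilon\psi(s)+f(s)]\le \mathbb{E}\sup_s[C\sum_k X_k\phi(s)_k+f(s)]$, which is exactly the form needed to feed the inductive proof of Theorem \ref{Theorem Main} (after which homogeneity of the supremum pulls the accumulated $C$ out front).

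The main obstacle is the $\sup/\mathbb{E}$ interchange together with the sign-swap argument: one must check that the interchange is valid (countability of $\mathcal{S}$ ensures measurability) and that the $f$-terms are genuinely invariant under $(s,s')\leftrightarrow(s',s)$. Everything else -- the Meir-Zhang identity, the application of Proposition \ref{Proposition Bound}(ii), and the collection of constants -- is routine bookkeeping.
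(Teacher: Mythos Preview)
Your proposal is correct and essentially identical to the paper's proof: the pair-supremum identity from the single Rademacher expectation, the Lipschitz bound, Proposition~\ref{Proposition Bound}(ii), the $(s,s')$-swap to drop the absolute value, splitting the supremum, and symmetry of $X$. The paper additionally introduces approximate maximizers $s_1^*,s_2^*$ and an arbitrary $\delta>0$ to handle a possibly unattained supremum, and (as you observe) also arrives at the inequality with $C$ attached to the $X_k$-term (equivalently with $Y_k=CX_k$), extracting $C$ only at the end of the induction in Theorem~\ref{Theorem Main}.
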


\begin{proof}
For $C$ we take the constant of Proposition \ref{Proposition Bound} and we
let $Y=CX$ and $Y_{k}=CX_{k}$ so that for every $v\in \ell _{2}$ 
\begin{equation}
\left\Vert v\right\Vert \leq \mathbb{E}\left\vert
\sum_{k}v_{k}Y_{k}\right\vert .  \label{eq proposition}
\end{equation}%
Let $\delta >0$ be arbitrary. Then, by definition of the Rademacher variable,%
\begin{align}
& 2\mathbb{E}\sup_{s\in \mathcal{S}}\left( \epsilon \psi \left( s\right)
+f\left( s\right) \right) -\delta  \notag \\
& =\sup_{s_{1},s_{2}\in \mathcal{S}}\psi \left( s_{1}\right) +f\left(
s_{1}\right) -\psi \left( s_{2}\right) +f\left( s_{2}\right) -\delta  \notag
\\
& \leq \psi \left( s_{1}^{\ast }\right) -\psi \left( s_{2}^{\ast }\right)
+f\left( s_{1}^{\ast }\right) +f\left( s_{2}^{\ast }\right)
\label{eq maximizer} \\
& \leq \left\Vert \phi \left( s_{1}^{\ast }\right) -\phi \left( s_{2}^{\ast
}\right) \right\Vert +f\left( s_{1}^{\ast }\right) +f\left( s_{2}^{\ast
}\right)  \label{eq lipschitz} \\
& \leq \mathbb{E}\left\vert \sum_{k}Y_{k}\left( \phi \left( s_{1}^{\ast
}\right) _{k}-\phi \left( s_{2}^{\ast }\right) _{k}\right) \right\vert
+f\left( s_{1}^{\ast }\right) +f\left( s_{2}^{\ast }\right)  \label{eq lemma}
\\
& \leq \mathbb{E}\sup_{s_{1},s_{2}\in \mathcal{S}}\left\vert
\sum_{k}Y_{k}\phi \left( s_{1}\right) _{k}-\sum_{k}Y_{k}\phi \left(
s_{2}\right) _{k}\right\vert +f\left( s_{1}\right) +f\left( s_{2}\right)
\label{eq bound supremum} \\
& =\mathbb{E}\sup_{s_{1}\in \mathcal{S}}\sum_{k}Y_{k}\phi \left(
s_{1}\right) _{k}+f\left( s_{1}\right) +\mathbb{E}\sup_{s_{2}\in \mathcal{S}%
}-\sum_{k}Y_{k}\phi \left( s_{2}\right) _{k}+f\left( s_{2}\right)
\label{eq drop abs} \\
& =2\left( \mathbb{E}\sup_{s\in \mathcal{S}}\sum_{k}Y_{k}\phi \left(
s\right) _{k}+f\left( s\right) \right) .  \label{eq symmetry}
\end{align}%
In (\ref{eq maximizer}) we pass to approximate maximizers $s_{1}^{\ast
},s_{2}^{\ast }\in \mathcal{S}$, in (\ref{eq lipschitz}) we use the assumed
Lipschitz property relating $\psi $ and $\phi $, and in (\ref{eq lemma}) we
apply inequality (\ref{eq proposition}). In (\ref{eq bound supremum}) we use
linearity and bound by a supremum in $s_{1}$ and $s_{2}$. In this expression
we can simply drop the absolute value, because for any fixed configuration
of the $Y_{k}$ the maximum will be attained when the difference is positive,
since the remaining expression $f\left( s_{1}\right) +f\left( s_{2}\right) $
is invariant under the exchange of $s_{1}$ and $s_{2}$. This gives (\ref{eq
drop abs}). The identity (\ref{eq symmetry}) then follows from the symmetry
of the variables $Y_{k}$. Since $\delta >0$ was arbitrary, the result
follows.\bigskip
\end{proof}

\begin{proof}[Proof of Theorem \protect\ref{Theorem Main}]
The constant $C$ and the $Y_{k}$ are chosen as in the previous Lemma. We
prove by induction that $\forall $ $m\in \left\{ 0,...,n\right\} $%
\begin{equation*}
\mathbb{E}\sup_{s\in \mathcal{S}}\sum_{i}\epsilon _{i}\psi _{i}\left(
s\right) \leq \mathbb{E}\left[ \sup_{s\in \mathcal{S}}\sum_{i:1\leq i\leq
m}\sum_{k}Y_{ik}\phi _{i}\left( s\right) _{k}+\sum_{i:m<i\leq n}^{n}\epsilon
_{i}\psi _{i}\left( s\right) \right] .
\end{equation*}%
The result then follows for $m=n$. The case $m=0$ is an obvious identity.
Assume the claim to hold for fixed $m-1$ , with $m\leq n$. We denote $%
\mathbb{E}_{m}$ $=\mathbb{E}\left[ .|\left\{ \epsilon _{i},Y_{ik}:i\neq
m\right\} \right] $ and define $f:\mathcal{S\rightarrow 
\mathbb{R}
}$ by%
\begin{equation*}
f\left( s\right) =\sum_{i:1\leq i<m}\sum_{k}Y_{ik}\phi _{i}\left( s\right)
_{k}+\sum_{i:m<i\leq n}^{n}\epsilon _{i}\psi _{i}\left( s\right) .
\end{equation*}%
Then%
\begin{eqnarray*}
\mathbb{E}\sup_{s\in \mathcal{S}}\sum_{i}\sigma _{i}\psi _{i}\left( \mathbf{c%
}\right) &\leq &\mathbb{E}\left[ \sup_{s\in \mathcal{S}}\sum_{i:1\leq
i<m}\sum_{k}Y_{ik}\phi _{i}\left( s\right) _{k}+\sum_{i:m\leq i\leq
n}^{n}\epsilon _{i}\psi _{i}\left( s\right) \right] \\
&=&\mathbb{E~E}_{m}\sup_{s\in \mathcal{S}}\left( \epsilon _{m}\psi
_{m}\left( s\right) +f\left( s\right) \right) \\
&\leq &\mathbb{E~E}_{m}\sup_{s\in \mathcal{S}}\sum_{k}Y_{mk}\phi _{m}\left(
s\right) _{k}+f\left( s\right) \\
&=&\mathbb{E}~\sup_{s\in \mathcal{S}}\sum_{i:1\leq i\leq
m}\sum_{k}Y_{ik}\phi _{i}\left( s\right) _{k}+\sum_{i:m<i\leq n}\epsilon
_{i}\psi _{i}\left( s\right) .
\end{eqnarray*}%
The first inequality is the induction hypothesis, the second is Lemma \ref%
{Lemma Key}.\bigskip
\end{proof}

\section{A negative result\label{Section negative result}}

Conjecture \ref{Conjecture False} can be refuted by a simple counterexample.
Let $\mathcal{X}=\ell _{2}$ with canonical basis $\left( e_{i}\right) $ and
set $x_{i}=e_{i}$ for $1\leq i\leq n$. Let $\tciFourier $ be the unit ball
in the set of bounded operators $\mathcal{B}\left( \ell _{2}\right) $, and
for $h$ we take the function $h:x\in \ell _{2}\mapsto \left\Vert
x\right\Vert $, which has Lipschitz norm equal to one. 

If the conjecture was true then there is a universal constant $K$ such that%
\begin{equation}
\mathbb{E}\sup_{T\in \mathcal{B}\left( H\right) :\left\Vert T\right\Vert
_{\infty }\leq 1}\sum_{i}\epsilon _{i}\left\Vert Tx_{i}\right\Vert \leq K%
\mathbb{E}\sup_{T\in \mathcal{B}\left( H\right) :\left\Vert T\right\Vert
_{\infty }\leq 1}\left\Vert \sum_{i}\epsilon _{i}Tx_{i}\right\Vert .
\label{Absurd inequality}
\end{equation}%
For any Rademacher sequence $\epsilon =\left( \epsilon _{i}\right) $ we let $%
T_{\epsilon }$ be the operator defined by $T_{\epsilon }e_{i}=e_{i}$ if $%
i\leq n$ and $\epsilon _{i}=1$, and $T_{\epsilon }=0$ in all other cases.
Clearly $T_{\epsilon }$ has norm $\left\Vert T_{\epsilon }\right\Vert
_{\infty }\leq 1$ (it is the orthogonal projection to the subspace spanned
by the basis vectors $e_{i}$ such that $\epsilon _{i}=1$). Then%
\begin{equation*}
\frac{n}{2}=\mathbb{E}\left\vert \left\{ i:\epsilon _{i}=1\right\}
\right\vert =\mathbb{E}\sum_{i}\epsilon _{i}\left\Vert T_{\epsilon
}x_{i}\right\Vert \leq \mathbb{E}\sup_{T\in \mathcal{B}\left( H\right)
:\left\Vert T\right\Vert _{\infty }\leq 1}\sum_{i}\epsilon _{i}\left\Vert
Tx_{i}\right\Vert .
\end{equation*}%
But on the other hand, the orthonormality of the Rademacher sequence implies
that%
\begin{equation*}
\mathbb{E}\sup_{T\in \mathcal{B}\left( H\right) :\left\Vert T\right\Vert
_{\infty }\leq 1}\left\Vert \sum_{i}\epsilon _{i}Tx_{i}\right\Vert \leq 
\mathbb{E}\left\Vert \sum_{i}\epsilon _{i}e_{i}\right\Vert \leq \sqrt{n}.
\end{equation*}%
With (\ref{Absurd inequality}) we obtain $n/2\leq K\sqrt{n}$ for some
universal constant $K$, which is absurd.

\end{document}